\DeclareMathOperator{\Lap}{Lap}
\DeclareMathOperator{\range}{range}
\newtheorem{thm}{Theorem}
\newtheorem{defi}{Definition}
\newtheorem{prop}{Proposition}
\newcommand{\norm}[1]{\left\lVert#1\right\rVert}
\title{Enhancing Learning with Label Differential Privacy by Vector Approximation}
\author{%
	Puning Zhao \\
	Zhejiang Lab\\
	Hangzhou, Zhejiang, China\\
	\texttt{pnzhao@zhejianglab.com} \\
	\And
	Rongfei Fan \\
	Beijing Institute of Technology \\
	Beijing, China \\
	\texttt{fanrongfei@bit.edu.cn} \\
	\And
	Huiwen Wu\\
	Zhejiang Lab\\
	Hangzhou, Zhejiang, China\\
	\texttt{huiwen0820@outlook.com}\\
	\And
	Qingming Li\\
	Zhejiang University\\
	Hangzhou, Zhejiang, China\\
	\texttt{liqm@zju.edu.cn}
	\And
	Jiafei Wu\\
	Zhejiang Lab\\
	Hangzhou, Zhejiang, China\\
	\texttt{wujiafei@zhejianglab.com}\\
	\And
	Zhe Liu\\
	Zhejiang Lab\\
	Hangzhou, Zhejiang, China\\
	\texttt{zhe.liu@zhejianglab.com}\\
}
\begin{document}

\maketitle

\begin{abstract}
	Label differential privacy (DP) is a framework that protects the privacy of labels in training datasets, while the feature vectors are public. Existing approaches protect the privacy of labels by flipping them randomly, and then train a model to make the output approximate the privatized label. However, as the number of classes $K$ increases, stronger randomization is needed, thus the performances of these methods become significantly worse. In this paper, we propose a vector approximation approach, which is easy to implement and introduces little additional computational overhead. Instead of flipping each label into a single scalar, our method converts each label into a random vector with $K$ components, whose expectations reflect class conditional probabilities. Intuitively, vector approximation retains more information than scalar labels. A brief theoretical analysis shows that the performance of our method only decays slightly with $K$. Finally, we conduct experiments on both synthesized and real datasets, which validate our theoretical analysis as well as the practical performance of our method.
\end{abstract}

\section{Introduction} \label{intro}

Differential privacy (DP) \cite{dwork2006calibrating} is an effective approach for privacy protection, and has been applied extensively \cite{erlingsson2014rappor,ding2017collecting,tang2017privacy,near2018differential}. However, in supervised learning problems, the original definition of DP can be stronger than necessary, resulting in unnecessary sacrifice of utility. In particular, sometimes it is reasonable to assume that features are publicly available, while only labels are highly sensitive and need to be protected, such as recommender systems  \cite{mcsherry2009differentially}, computational advertising \cite{mcmahan2013ad} and medical diagnosis \cite{bussone2020trust}. These scenarios usually use some basic demographic information as features, which are far less sensitive than the labels. Under such background, label DP has emerged in recent years \cite{ghazi2021deep,malek2021antipodes,esfandiari2022label,tang2022machine,cunningham2022geo}, under which the output is only required to be insensitive to the replacement of training labels.

There are several existing methods for learning with label DP, such as randomized response \cite{warner1965randomized}, RRWithPrior \cite{ghazi2021deep} and ALIBI \cite{malek2021antipodes}. These methods flip training labels $Y$ randomly to privatized labels $\tilde{Y}$ to satisfy the privacy requirement. However, a common issue is that the performances decrease sharply with the number of classes $K$. Intuitively, with $K$ increases, it is inevitable for the privatized label to be randomly selected from a large number of candidates, thus the probability of maintaining the original label $\text{P}(\tilde{Y}=Y)$ is small. It is possible to design some tricks to narrow down the candidate list, such as RRWithPrior \cite{ghazi2021deep}, which picks the candidates based on their prior probabilities. However, the performances under $\epsilon$-label DP with small $\epsilon$ are still far from the non-private baseline. From an information-theoretic perspective, a single scalar can only convey limited information \cite{cuff2016differential,wang2016relation}. Therefore, as long as the label is transformed into a scalar, with increasing $K$, it is increasingly unlikely to maintain the statistical dependence between the original label and the privatized one, thus the model performance drops with $K$.

In this paper, we propose a \emph{vector approximation} approach to solve the multi-class classification problem with large number of classes $K$ under label DP. To satisfy the privacy requirement, each label is transformed into a random vector $\mathbf{Z}=(Z(1),\ldots, Z(K))\in \{0,1\}^K$, and the transformation satisfies $\epsilon$-label DP. Intuitively, compared with scalar transformation, conversion of labels to multi-dimensional vectors preserves more information, which is especially important with large $K$, thus our method achieves a better performance.

We then move on to a deeper understanding of our new proposed approach. For a given feature vector $\mathbf{x}$, let $\Delta(\mathbf{x})$ be the maximum estimation error of regression function $\eta_k(\mathbf{x})$ over $k=1,\ldots, K$. Our analysis shows that as long as $\Delta(\mathbf{x})$ is smaller than the gap between the largest and second largest one among $\{\eta_1(\mathbf{x}), \ldots, \eta_K(\mathbf{x})\}$, the classifier will yield a Bayes optimal prediction. For statistical learning model, such as the nearest neighbor classifier, $\Delta(x)$ can be viewed as a union bound that only grows with $O(\sqrt{\ln K})$. For deep neural networks, while it is hard to analyze the growth rate of $\Delta(\mathbf{x})$ over $K$ rigorously, such growth rate is slow by intuition and experience \cite{widmann2019calibration,zhao2021calibrating,kull2019beyond,rajaraman2022deep}. As a result, the classification performance of our approach only decreases slowly with $K$.

Finally, we conduct experiments on both synthesized data and standard benchmark datasets. The goal of using synthesized data is to display how the performance degrades with increasing number of classes $K$. The results show that with small $K$, our method achieves nearly the same results with existing approaches. However, with $K$ increases, our vector approximation method exhibits clear advantages. We then test the performance on some standard benchmark datasets, which also validate the effectiveness of our proposed method.

The contributions of this work are summarized as follows.
\begin{itemize}
	\item We propose a vector approximation approach for learning with label DP. Our method is convenient to implement and introduces little additional computational complexity.
	
	\item We provide a brief theoretical analysis showing that the model performance remains relatively stable with increasing number of classes $K$.
	
	\item Numerical results on synthesized data validate the theoretical analysis, which shows that the performance of our method only decays slightly with $K$. Experiments on real datasets also validate the effectiveness of our proposed method.
\end{itemize}

\section{Related Work}
\textbf{Central label DP.} \cite{esfandiari2022label} proposes an approach by clustering feature vectors, and then resampling each label based on the samples within the same cluster. \cite{malek2021antipodes} proposes PATE, which splits the training datasets into several parts, and each part is used to train a teacher model. The labels are then altered by a majority vote from teacher models. Appropriate noise is added to the voting for privacy protection. Then PATE trains a student model using these altered labels. PATE is further improved in \cite{tang2022machine}. Despite achieving good performances, these methods are designed under central label DP requirement (see Definition \ref{def:labeldp}), which requires that the curator is trusted, instead of protecting the privacy of each sample individually. Therefore, the accuracy scores are not comparable with ours.

\textbf{Local label DP.} Under local DP, labels are privatized before training to protect each sample individually. The performance of simple randomized response \cite{warner1965randomized} drops with $K$ sharply. Here we list two main modifications:

\emph{1) RRWithPrior \cite{ghazi2021deep}.} This is an important improvement over the randomized response method, which partially alleviates the degradation of performance for large $K$. RRWithPrior picks transformed labels from some candidate labels with high conditional probabilities, instead of from all $K$ labels, thus it narrows down the list of candidates and somewhat reduces the label noise. However, this method requires multiple training stages to get a reliable estimate of prior probability. As a result, the samples in the beginning stages are used inefficiently. Moreover, RRWithPrior optimizes the probability of retaining the original label, i.e. $\text{P}(\tilde{Y}=Y)$, with $\tilde{Y}$ being the transformed one. Such optimization goal does not always align with the optimization of classification risk. 

\emph{2) ALIBI \cite{malek2021antipodes}.} This method conducts one-hot encoding first and then adds Laplacian noise to protect privacy. The design shares some similarities with our method. However, ALIBI generates outputs using Bayesian inference and normalizes them to $1$ using a softmax operation. Intuitively, such normalization causes bias. Moreover, the time complexity for transforming each label is $O(K^2)$, thus the computation cost is high with large $K$. On the contrary, our vector approximation only requires $O(K)$ time for each label transformation.

\textbf{Others.} There are also some works on regression under label DP, such as \cite{ghazi2022regression,badanidiyuru2023optimal}. Moreover, some methods protect the privacy of training samples in a better way \cite{wu2023label,krause2023label}. Furthermore, \cite{krichene2024private,chua2024training} discussed the case with both public and private features.

In this work, with the intuition of preserving more information, instead of transforming each label to another scalar label, our vector approximation approach privatizes each label into a vector $\mathbf{Z}$ that is not normalized to $1$. Compared with existing methods, our approach introduces little additional computational complexity, while achieving better performance with large number of classes $K$.

\section{Preliminaries}\label{sec:prelim}
\subsection{Label DP} 

To begin with, we recall some concepts related to DP and introduce the notion of label DP. 

\begin{defi}\label{def:dp}
	(Differential Privacy (DP) \cite{dwork2006calibrating}). Let $\epsilon\in \mathbb{R}{\geq 0}$. An algorithm $\mathcal{A}$ is $\epsilon$-differentially private (denoted as $\epsilon$-DP) if for any two adjacent datasets $D$ and $D'$ and any $S\subseteq \range(\mathcal{A})$,
	\begin{eqnarray}
		\text{P}(\mathcal{A}(D)\in S)\leq e^\epsilon\text{P}(\mathcal{A}(D')\in S),
		\label{eq:dp}
	\end{eqnarray}
	in which two datasets $D$ and $D'$ are adjacent if they differ on a single training sample, including both the feature vector and the label.
\end{defi}

In supervised learning problems, the output of algorithm $\mathcal{A}$ is the trained model, while the input is the training dataset. Under Definition \ref{def:dp}, both features and labels are privatized. However, in some scenarios, protecting features can be unnecessary, and we focus solely on the privacy of labels. Correspondingly, the notion of label DP is defined as follows:

\begin{defi}\label{def:labeldp}
	(Label Differential Privacy (Label DP) \cite{chaudhuri2011sample}) An algorithm $\mathcal{A}$ is $\epsilon$-label differentially private (denoted as $\epsilon$-label DP) if for any two datasets $D$ and $D'$ that differ on the label of a single training sample, and any $S\subseteq \range(\mathcal{A})$, \eqref{eq:dp} holds.
\end{defi}

Compared with Definition \ref{def:dp}, now the algorithm only needs to guarantee \eqref{eq:dp} when the label of a sample changes. 

Throughout this paper, we consider the local DP setting, i.e. each label is transformed to a random variable (or vector) $\mathbf{Z}=M(Y)$ individually by a mechanism $M:\mathcal{Y}\rightarrow\mathcal{Z}$, in which $\mathcal{Y}$ denotes the set of all labels, while $\mathcal{Z}$ is the space of transformed variables or vectors. The local label DP is defined as follows. 
\begin{defi}\label{def:local}
	A random mechanism $M:\mathcal{Y}\rightarrow \mathcal{Z}$ is $\epsilon$-local label DP if for any $y,y'\in \mathcal{Y}$ and any $S\subseteq \mathcal{Z}$,
	\begin{eqnarray}
		\text{P}(M(Y)\in S|Y=y)\leq e^\epsilon \text{P}(M(Y)\in S|Y=y').
	\end{eqnarray}
\end{defi}
Given $N$ training samples $(\mathbf{X}_i, Y_i)$, $i=1,\ldots, N$, the mechanism $M$ transforms $Y_i$ to $\mathbf{Z}_i$, $i=1, \ldots, N$. It can be easily shown that local label DP is a stronger privacy requirement compared with central label DP (Definition \ref{def:labeldp}). To be more precise, as long as $M$ is local $\epsilon$-label DP, then any learning algorithms using $(\mathbf{X}_i, \mathbf{Z}_i)$, $i=1,\ldots, N$ as training samples are central $\epsilon$-label DP. 

\subsection{Classification Risk}
Now we describe the goal and the evaluation metric of the classification task. For a $K$-class classification problem, the goal is to build a classifier that predicts the label $Y\in \mathcal{Y}=[K]$, in which $[K]=\{1,\ldots, K\}$, given a feature vector $\mathbf{X}\in \mathcal{X}\subseteq \mathbb{R}^d$. In this work, we use $0-1$ loss for classification. For a classifier $\hat{Y}=c(\mathbf{X})$, in which $\hat{Y}$ is the predicted label, the risk is defined as the expectation of the loss function:
\begin{eqnarray}
	R[\hat{Y}]=\mathbb{E}[\mathbf{1}(\hat{Y}\neq Y)]=\text{P}(c(\mathbf{X})\neq Y).
	\label{eq:risk}
\end{eqnarray}

Define $K$ functions, named $\eta_1,\ldots, \eta_K$, as the conditional class probability given a specific feature vector $\mathbf{x}$:
\begin{eqnarray}
	\eta_j(\mathbf{x}) = \text{P}(Y=j|\mathbf{X}=\mathbf{x}), j=1,\ldots, K.
	\label{eq:etadf}
\end{eqnarray}
If $\eta_k$ is known, then for any test sample with feature vector $\mathbf{x}$, the classifier can make the prediction be the class with maximum conditional probability. Therefore, the Bayes optimal classifier $c^*$ is
\begin{eqnarray}
	c^*(\mathbf{x})=\underset{j\in [K]}{\arg\max} \; \eta_j(\mathbf{x}).
	\label{eq:cstar}
\end{eqnarray}
The risk corresponds to the ideal classifier $c^*$, called Bayes risk, is
\begin{eqnarray}
	R^*=\text{P}(c^*(\mathbf{X})\neq Y).
	\label{eq:bayes}
\end{eqnarray}
Bayes risk is the minimum risk among all possible classifiers. In reality, $\eta_k$ is unknown. Therefore, for a practical classifier that is trained using finite number of samples, there is some gap between its risk and the Bayes risk. Such gap $R-R^*$ is called the excess risk. Denote
\begin{eqnarray}
	\eta^*(\mathbf{x})=\underset{j}{\max}\; \eta_j(\mathbf{x})
	\label{eq:etastar}
\end{eqnarray}
as the maximum of $m$ regression functions. The next proposition gives an expression of the excess risk.
\begin{prop}\label{prop:excess2}
	For any practical classifier $c:\mathcal{X}\rightarrow \mathcal{Y}$, the excess risk is
	\begin{eqnarray}
		R-R^*=\int (\eta^*(\mathbf{x}) - \mathbb{E}[\eta_{c(\mathbf{x})}(\mathbf{x})]) f(\mathbf{x})d\mathbf{x},
		\label{eq:excess}
	\end{eqnarray}
	in which $R^*$ is the Bayes risk, and $f$ is the probability density function (pdf) of feature vector $\mathbf{X}$, $\eta_{c(\mathbf{x})}(\mathbf{x})$ is just $\eta_j(\mathbf{x})$ with $j=c(\mathbf{x})$. Note that $c(\mathbf{x})$ is random due to the randomness of the training dataset. Therefore the expectation in \eqref{eq:excess} is taken over $N$ training samples.
\end{prop}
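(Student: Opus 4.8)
The plan is to decompose the risk pointwise in the feature vector and then take expectations in the correct order, using the tower property. First I would fix a test point $\mathbf{X}=\mathbf{x}$ together with a realization of the $N$ training samples, which determines the (random) classifier $c$. Since the test label $Y$ is drawn from the conditional distribution $(\eta_1(\mathbf{x}),\ldots,\eta_K(\mathbf{x}))$ defined in \eqref{eq:etadf}, and the trained classifier depends only on the training data, conditionally on $\mathbf{X}=\mathbf{x}$ the index $c(\mathbf{x})$ and the label $Y$ are independent. Hence
\begin{eqnarray}
	\text{P}(c(\mathbf{x})=Y\mid \mathbf{X}=\mathbf{x},\ \text{training data})=\eta_{c(\mathbf{x})}(\mathbf{x}),
\end{eqnarray}
so the conditional misclassification probability is $1-\eta_{c(\mathbf{x})}(\mathbf{x})$.

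Next I would integrate out the remaining randomness. Taking expectation over the training data turns the conditional misclassification probability into $1-\mathbb{E}[\eta_{c(\mathbf{x})}(\mathbf{x})]$, and then integrating against the marginal density $f$ of $\mathbf{X}$ gives
\begin{eqnarray}
	R=\int\left(1-\mathbb{E}[\eta_{c(\mathbf{x})}(\mathbf{x})]\right)f(\mathbf{x})\,d\mathbf{x}.
\end{eqnarray}
Applying the same argument to the Bayes classifier $c^*$, which is deterministic and satisfies $\eta_{c^*(\mathbf{x})}(\mathbf{x})=\eta^*(\mathbf{x})$ by \eqref{eq:cstar} and \eqref{eq:etastar}, yields $R^*=\int(1-\eta^*(\mathbf{x}))f(\mathbf{x})\,d\mathbf{x}$. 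Subtracting the two expressions and merging the integrals (legitimate since all integrands are bounded in $[0,1]$) produces exactly \eqref{eq:excess}.

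The computation is essentially a routine application of iterated expectation — first over $Y$, then over the training randomness, then over $\mathbf{X}$ — so I do not anticipate a substantive obstacle. The only point that genuinely needs care is the independence claim: the learned classifier $c$ is a function of the training sample alone, and therefore, conditionally on $\mathbf{X}=\mathbf{x}$, it is independent of the test label $Y$. This is precisely what licenses replacing $\text{P}(c(\mathbf{x})=Y\mid\mathbf{X}=\mathbf{x})$ by $\mathbb{E}[\eta_{c(\mathbf{x})}(\mathbf{x})]$, and it is the reason the expectation in \eqref{eq:excess} is taken purely over the $N$ training samples, as the statement asserts.
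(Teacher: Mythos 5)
Your proof is correct and follows essentially the same route as the paper: express $R$ and $R^*$ as integrals of the conditional misclassification probabilities $1-\eta_{c(\mathbf{x})}(\mathbf{x})$ and $1-\eta^*(\mathbf{x})$ against $f$, then subtract. The only difference is that you make explicit the conditional-independence step (the trained classifier depends only on the training data, hence is independent of the test label given $\mathbf{X}=\mathbf{x}$), which the paper uses implicitly when writing $R=\mathbb{E}[\int(1-\eta_{c(\mathbf{x})}(\mathbf{x}))f(\mathbf{x})d\mathbf{x}]$.
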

\begin{proof}
	From \eqref{eq:cstar} and \eqref{eq:Rstar}, the Bayes risk is
	\begin{eqnarray}
		R^*=\text{P}(Y\neq c^*(\mathbf{X}))=\int \text{P}(Y\neq c^*(\mathbf{x})|\mathbf{X}=\mathbf{x}) f(\mathbf{x}) d\mathbf{x}
		=  \int (1-\eta^*(\mathbf{x})) f(\mathbf{x})d\mathbf{x}.
		\label{eq:Rstar}
	\end{eqnarray}
	The actual risk for a practical classifier is
	\begin{eqnarray}
		R=\text{P}(Y\neq c(\mathbf{X}))=\mathbb{E}\left[\int \left(1-\eta_{c(\mathbf{x})}(\mathbf{x})\right)f(\mathbf{x})d\mathbf{x}\right].
		\label{eq:R}
	\end{eqnarray}
	From \eqref{eq:R} and \eqref{eq:Rstar},
	\begin{eqnarray}
		R-R^*=\int (\eta^*(\mathbf{x}) - \mathbb{E}[\eta_{c(\mathbf{x})}(\mathbf{x})]) f(\mathbf{x})d\mathbf{x}.
	\end{eqnarray}
	The proof is complete.
\end{proof}

\subsection{Notations}
Finally, we clarify the notations. Throughout this paper, $a\wedge b=\min(a,b)$, $a\vee b=\max(a,b)$. We claim that $a\lesssim b$ if there exists a constant $C$ such that $a\leq Cb$. Notation $\gtrsim$ is defined similarly. $\Lap(\lambda)$ denotes the Laplacian distribution with parameter $\lambda$, whose probability density function (pdf) is $f(u)=e^{-|u|/\lambda}/(2\lambda)$. Furthermore, we use capital letters to denote random variables, and lowercase letters to denote values.

\section{The Proposed Method}\label{sec:method}

Here we present and explain our new method. Section \ref{sec:description} shows the basic idea. Section \ref{sec:analysis} gives an intuitive analysis. Finally, practical implementation with deep learning is discussed in Section \ref{sec:implement}.

\subsection{Method description}\label{sec:description}
Instead of transforming a label to another scalar, we construct a random vector $\mathbf{Z}\in \{0,1\}^K$, which can preserve more information than a random one-dimensional scalar. To be more precise, denote $Z(j)$ as the $j$-th element of $\mathbf{Z}$.

Given the original label $Y=j'$, let the conditional distribution of $Z(j)$ be
\begin{eqnarray}
	\text{P}(Z(j) = 1|Y=j')=\left\{
	\begin{array}{ccc}
		\frac{e^\frac{\epsilon}{2}}{1+e^\frac{\epsilon}{2}} &\text{if} & j=j'\\
		\frac{1}{1+e^\frac{\epsilon}{2}} &\text{if} & j\neq j',
	\end{array}
	\right.
	\label{eq:mechanism1}
\end{eqnarray}
and
\begin{eqnarray}
	\text{P}(Z(j) = 0|Y=j')=\left\{
	\begin{array}{ccc}
		\frac{1}{1+e^\frac{\epsilon}{2}} &\text{if} & j=j'\\
		\frac{e^\frac{\epsilon}{2}}{1+e^\frac{\epsilon}{2}} &\text{if} & j\neq j'.
	\end{array}
	\right.	
	\label{eq:mechanism2}
\end{eqnarray}

The privacy mechanism described by \eqref{eq:mechanism1} and \eqref{eq:mechanism2} is a simple form of RAPPOR \cite{erlingsson2014rappor}, which is also shown to be optimal in distribution estimation \cite{kairouz2016discrete}. It is ensured that conditional on $Y$, $Z(j)$ for different $j$ are independent. The following theorem shows the DP property.
\begin{thm}
	The privacy mechanism \eqref{eq:mechanism1}, \eqref{eq:mechanism2} is $\epsilon$-local label DP.
\end{thm}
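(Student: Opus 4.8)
The plan is to verify Definition \ref{def:local} directly from the explicit conditional probabilities in \eqref{eq:mechanism1} and \eqref{eq:mechanism2}. First I would observe that, conditional on $Y=j'$, the coordinates $Z(1),\ldots,Z(K)$ are independent, so for any target vector $\mathbf{z}=(z(1),\ldots,z(K))\in\{0,1\}^K$ the likelihood factorizes as
\begin{eqnarray}
	\text{P}(\mathbf{Z}=\mathbf{z}\mid Y=j')=\prod_{j=1}^K \text{P}(Z(j)=z(j)\mid Y=j').
	\label{eq:factorize}
\end{eqnarray}
Since $\mathcal{Z}=\{0,1\}^K$ is finite, it suffices to bound the ratio of this probability for two labels $y=j'$ and $y'=j''$ on each singleton $\{\mathbf{z}\}$, and then sum over $S$.

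Next I would compute the ratio. For every coordinate $j\notin\{j',j''\}$ the factor $\text{P}(Z(j)=z(j)\mid Y=j')$ equals $\text{P}(Z(j)=z(j)\mid Y=j'')$ regardless of $z(j)$, because from \eqref{eq:mechanism1}--\eqref{eq:mechanism2} the distribution of $Z(j)$ depends on $Y$ only through the indicator $\mathbf{1}(j=Y)$. Hence all factors cancel except possibly those at $j=j'$ and $j=j''$, giving
\begin{eqnarray}
	\frac{\text{P}(\mathbf{Z}=\mathbf{z}\mid Y=j')}{\text{P}(\mathbf{Z}=\mathbf{z}\mid Y=j'')}
	=\frac{\text{P}(Z(j')=z(j')\mid Y=j')}{\text{P}(Z(j')=z(j')\mid Y=j'')}\cdot\frac{\text{P}(Z(j'')=z(j'')\mid Y=j')}{\text{P}(Z(j'')=z(j'')\mid Y=j'')}.
	\label{eq:ratio}
\end{eqnarray}
Each of the two factors on the right is a ratio of the form $(e^{\epsilon/2}/(1+e^{\epsilon/2}))\,/\,(1/(1+e^{\epsilon/2}))=e^{\epsilon/2}$ or its reciprocal $e^{-\epsilon/2}$, depending on whether $z(j')$ matches the ``own-label'' outcome and similarly for $z(j'')$. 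Therefore the product in \eqref{eq:ratio} lies in $\{e^{-\epsilon},1,e^{\epsilon}\}$, and in all cases is at most $e^\epsilon$. Summing the bound $\text{P}(\mathbf{Z}=\mathbf{z}\mid Y=j')\le e^\epsilon\,\text{P}(\mathbf{Z}=\mathbf{z}\mid Y=j'')$ over all $\mathbf{z}\in S$ yields the requirement of Definition \ref{def:local}, so the mechanism is $\epsilon$-local label DP.

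There is no real obstacle here; the only thing to be careful about is the bookkeeping in the cancellation step, namely making explicit that $Z(j)$ for $j\neq Y$ has the same law under any label other than $j$, so that exactly the two coordinates $j'$ and $j''$ survive and each contributes a factor bounded by $e^{\epsilon/2}$. One might also remark that the worst case $e^\epsilon$ is attained (e.g. when $\mathbf{z}$ is the indicator vector of $j'$), so the privacy analysis is tight, and that the per-label transformation clearly runs in $O(K)$ time as claimed in the introduction.
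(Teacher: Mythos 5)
Your proof is correct and follows essentially the same route as the paper's: factorize by conditional independence, cancel all coordinates except the two at positions $j'$ and $j''$, and bound each surviving factor by $e^{\epsilon/2}$. Your additional remarks (summing over singletons of the finite output space, and tightness at the indicator vector of $j'$) are fine but not needed beyond what the paper already does.
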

\begin{proof}
	For any $j$, $j'$ and any $\mathbf{z}\in \{0,1\}^K$, we have
	\begin{eqnarray}
		\frac{\text{P}(\mathbf{Z}=\mathbf{z}|Y=j)}{\text{P}(\mathbf{Z}=\mathbf{z}|Y=j')}&=&\Pi_l \frac{\text{P}(Z(l)=z(l)|Y=j)}{\text{P}(Z(l)=z(l)|Y=j')}\nonumber\\
		&=& \frac{\text{P}(Z(j)=z(j)|Y=j)\text{P}( Z(j')=z(j')|Y=j)}{\text{P}(Z(j)=z(j)|Y=j')\text{P}(Z(j')=z(j')|Y=j')}\nonumber\\
		&\leq & (e^\frac{\epsilon}{2})^2=e^\epsilon,
	\end{eqnarray}
	in which the first step holds since $Z(l)$ are conditional independent on $l$. The second step holds because changing $Y$ from $j$ to $j'$ does not affect the conditional distributions of elements of $\mathbf{Z}$ other than the $j$-th and the $j'$-th one, i.e. $\text{P}(Z(l)=z(l)|Y=j)=\text{P}(Z(l)=z(l)|Y=j')$ for $l\notin \{j,j'\}$. Therefore, we only need to bound the ratio of the probability mass of the $j$-th and $j'$-th element. The last step holds since \eqref{eq:mechanism1} and \eqref{eq:mechanism2} ensures that $\text{P}(Z(j)=z(j)|Y=j)/\text{P}(Z(j)=z(j)|Y=j')\leq e^{\epsilon/2}$. Therefore, the local label DP requirement in Definition \ref{def:local} is satisfied.	
\end{proof}

Following above procedures, the labels of all $N$ training samples $Y_1,\ldots, Y_N$ are transformed into $N$ vectors $\mathbf{Z}_1, \ldots, \mathbf{Z}_N$. From \eqref{eq:cstar}, the optimal prediction is the class $j$ that maximizes $\eta_j(\mathbf{x})$. However, $\eta_j$ is unknown, and we would like to find a substitute. Towards this end, we train a model that can be either parametric (linear models, neural networks, etc.) or nonparametric ($k$ nearest neighbors, tree-based methods, etc.), to approximate $\mathbf{Z}_1,\ldots, \mathbf{Z}_N$. The model output can be expressed by
\begin{eqnarray}
	\mathbf{g}(\mathbf{x})=(g_1(\mathbf{x}), \ldots, g_K(\mathbf{x})).
	\label{eq:g}
\end{eqnarray}
$g_j(\mathbf{x})$ denotes the $j$-th element of the output vector. With a good model and sufficient training samples, $g_j(\mathbf{x})$ approximates $\mathbb{E}[Z(j)|\mathbf{X}=\mathbf{x}]$. It is expected that picking $j$ that maximizes $\eta_j(\mathbf{x})$ is nearly equivalent to maximizing $g_j(\mathbf{x})$. Therefore, in the prediction stage, for any test sample whose feature vector is $\mathbf{x}$, the predicted label is
\begin{eqnarray}
	\hat{Y}=c(\mathbf{x}):=\underset{j\in [K]}{\arg\max}\; g_j(\mathbf{x}).
	\label{eq:prediction}
\end{eqnarray}

\subsection{A Brief Analysis}\label{sec:analysis}
Now we briefly explain why our new method works. Define
\begin{eqnarray}
	\tilde{\eta}_j(\mathbf{x}) := \text{P}(Z(j) = 1|\mathbf{X}=\mathbf{x}).
	\label{eq:etazdf}
\end{eqnarray}
It is expected that the model output $(g_1(\mathbf{x}), \ldots, g_K(\mathbf{x}))$ approximates $(\tilde{\eta}_1(\mathbf{x}), \ldots, \tilde{\eta}_K(\mathbf{x}))$. Ideally, $g_j(\mathbf{x})=\tilde{\eta}_j(\mathbf{x})$, then from \eqref{eq:prediction}, $c(\mathbf{x})=\arg\max_j\tilde{\eta}_j(\mathbf{x})=\arg\max_j\eta_j(\mathbf{x})=c^*(\mathbf{x})$ is the optimal prediction. However, in reality, there are some gaps between $(g_1(\mathbf{x}), \ldots, g_K(\mathbf{x}))$ and $(\tilde{\eta}_1(\mathbf{x}), \ldots, \tilde{\eta}_K(\mathbf{x}))$, thus the prediction may be suboptimal. To quantify the effect of approximation error on the excess risk, denote $\Delta(\mathbf{x})$ as the maximum approximation error for all classes, i.e.
\begin{eqnarray}
	\Delta(\mathbf{x}) = \underset{j\in [K]}{\max}|g_j(\mathbf{x})-\tilde{\eta}_j(\mathbf{x})|.
	\label{eq:delta}
\end{eqnarray}
Then we show the following theorem.

\begin{thm}\label{thm:main}
	Denote $\eta_s(\mathbf{x})$ as the second largest conditional class probability given $\mathbf{x}$, i.e.
	\begin{eqnarray}
		\eta_s(\mathbf{x})=\underset{j\neq c^*(\mathbf{x})}{\max}\; \eta_j(\mathbf{x}).
		\label{eq:etas}
	\end{eqnarray}
	Then
	\begin{eqnarray}
		R-R^*\leq \int\mathbb{E}\left[2\Delta_\epsilon(\mathbf{x})\mathbf{1}(\eta^*(\mathbf{x})-\eta_s(\mathbf{x})\leq 2\Delta_\epsilon(\mathbf{x}))\right] f(\mathbf{x})d\mathbf{x},
		\label{eq:excessbound}
	\end{eqnarray}
	in which
	\begin{eqnarray}
		\Delta_\epsilon(\mathbf{x}):=\frac{e^\frac{\epsilon}{2}+1}{e^\frac{\epsilon}{2}-1}\Delta(\mathbf{x}).
		\label{eq:deltaeps}
	\end{eqnarray}
\end{thm}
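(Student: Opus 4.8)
The plan is to reduce the excess-risk identity of Proposition~\ref{prop:excess2} to a comparison that holds pointwise in $\mathbf{x}$ and in the training sample, and then exploit two facts: the uniform approximation bound $|g_j(\mathbf{x})-\tilde{\eta}_j(\mathbf{x})|\le\Delta(\mathbf{x})$ built into the definition \eqref{eq:delta}, and the fact that the mechanism \eqref{eq:mechanism1} turns $\tilde{\eta}_j$ into a simple increasing affine function of $\eta_j$.

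First I would compute $\tilde{\eta}_j(\mathbf{x})$ explicitly. Averaging \eqref{eq:mechanism1} over the conditional label distribution gives $\tilde{\eta}_j(\mathbf{x})=\frac{1}{1+e^{\epsilon/2}}+\frac{e^{\epsilon/2}-1}{e^{\epsilon/2}+1}\,\eta_j(\mathbf{x})$, an affine function of $\eta_j(\mathbf{x})$ with strictly positive slope $\Delta(\mathbf{x})/\Delta_\epsilon(\mathbf{x})=(e^{\epsilon/2}-1)/(e^{\epsilon/2}+1)$ (here $\epsilon>0$; otherwise the claimed bound is vacuous). Two consequences follow immediately: the ordering of the $\tilde{\eta}_j(\mathbf{x})$ agrees with that of the $\eta_j(\mathbf{x})$, so $\arg\max_j\tilde{\eta}_j(\mathbf{x})=c^*(\mathbf{x})$; and every difference satisfies $\tilde{\eta}_j(\mathbf{x})-\tilde{\eta}_k(\mathbf{x})=\frac{\Delta(\mathbf{x})}{\Delta_\epsilon(\mathbf{x})}(\eta_j(\mathbf{x})-\eta_k(\mathbf{x}))$.

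Next, fix $\mathbf{x}$ and a realization of the training data and consider the event $c(\mathbf{x})\neq c^*(\mathbf{x})$; on its complement the integrand of \eqref{eq:excess} is zero. On the event, \eqref{eq:prediction} gives $g_{c(\mathbf{x})}(\mathbf{x})\ge g_{c^*(\mathbf{x})}(\mathbf{x})$, and feeding $|g_j-\tilde{\eta}_j|\le\Delta(\mathbf{x})$ into both sides yields $\tilde{\eta}_{c^*(\mathbf{x})}(\mathbf{x})-\tilde{\eta}_{c(\mathbf{x})}(\mathbf{x})\le 2\Delta(\mathbf{x})$. Dividing by the slope converts this into $\eta^*(\mathbf{x})-\eta_{c(\mathbf{x})}(\mathbf{x})\le 2\Delta_\epsilon(\mathbf{x})$. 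Since $c(\mathbf{x})\neq c^*(\mathbf{x})$ forces $\eta_{c(\mathbf{x})}(\mathbf{x})\le\eta_s(\mathbf{x})$ by \eqref{eq:etas}, the same inequality also gives $\eta^*(\mathbf{x})-\eta_s(\mathbf{x})\le 2\Delta_\epsilon(\mathbf{x})$, so the indicator in \eqref{eq:excessbound} is $1$ whenever the integrand is positive. Combining the two cases, the sample-wise inequality $\eta^*(\mathbf{x})-\eta_{c(\mathbf{x})}(\mathbf{x})\le 2\Delta_\epsilon(\mathbf{x})\mathbf{1}(\eta^*(\mathbf{x})-\eta_s(\mathbf{x})\le 2\Delta_\epsilon(\mathbf{x}))$ holds for every realization.

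Finally I would take the expectation over the $N$ training samples, then integrate against $f(\mathbf{x})$ and apply Proposition~\ref{prop:excess2} to obtain \eqref{eq:excessbound}. I do not anticipate a real obstacle; the only point requiring care is bookkeeping the randomness — establishing the sample-wise inequality \emph{before} taking expectations, and keeping $\Delta_\epsilon(\mathbf{x})$, $c(\mathbf{x})$, and the indicator all inside the expectation rather than treating them as deterministic.
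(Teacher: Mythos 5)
Your proposal is correct and follows essentially the same route as the paper's own proof: compute the affine relation $\tilde{\eta}_j(\mathbf{x})=\frac{1}{1+e^{\epsilon/2}}+\frac{e^{\epsilon/2}-1}{e^{\epsilon/2}+1}\eta_j(\mathbf{x})$, use $g_{c(\mathbf{x})}(\mathbf{x})\ge g_{c^*(\mathbf{x})}(\mathbf{x})$ together with $|g_j-\tilde{\eta}_j|\le\Delta(\mathbf{x})$ to get $\eta^*(\mathbf{x})-\eta_{c(\mathbf{x})}(\mathbf{x})\le 2\Delta_\epsilon(\mathbf{x})$ on the misclassification event, note that $\eta_{c(\mathbf{x})}(\mathbf{x})\le\eta_s(\mathbf{x})$ forces the indicator to be active, and plug the resulting pointwise bound into Proposition~\ref{prop:excess2}. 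Your explicit remarks on the sign of the slope (requiring $\epsilon>0$) and on keeping the randomness inside the expectation are slightly more careful than the paper's write-up but do not change the argument.
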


\begin{proof}
	From Proposition \ref{prop:excess2}, the prediction is suboptimal if $c(\mathbf{x})\neq c^*(\mathbf{x})$, which happens only if
	\begin{eqnarray}
		g_{c(\mathbf{x})}(\mathbf{x})\geq g_{c^*(\mathbf{x})}(\mathbf{x}).
		\label{eq:gbound}
	\end{eqnarray}
	From the definition of $\Delta(\mathbf{x})$ in \eqref{eq:delta}, \eqref{eq:gbound} indicates that 
	\begin{eqnarray}
		\tilde{\eta}_{c(\mathbf{x})}(\mathbf{x})\geq \tilde{\eta}^*(\mathbf{x}) - 2\Delta(\mathbf{x}).
		\label{eq:etak}
	\end{eqnarray}
	From \eqref{eq:mechanism1}, \eqref{eq:mechanism2} and \eqref{eq:etazdf}, for all $j\in [K]$,
	\begin{eqnarray}
		\tilde{\eta}_j(\mathbf{x})&=&\sum_{l=1}^K \text{P}(Y=l|\mathbf{X}=\mathbf{x})\text{P}(Z(j)=1|Y=l)\nonumber\\
		&=&\frac{1}{1+e^{\epsilon/2}}\sum_{l\neq j} \eta_l(\mathbf{x})+\frac{e^{\epsilon/2}}{1+e^{\epsilon/2}}\eta_j(\mathbf{x})\nonumber\\
		&=&\frac{1}{1+e^{\epsilon/2}}+\frac{e^{\epsilon/2}-1}{e^{\epsilon/2}+1} \eta_j(\mathbf{x}),
		\label{eq:convert}
	\end{eqnarray}
	in which the last step holds since from \eqref{eq:etadf}, $\sum_j \eta_j(\mathbf{x})=1$. Therefore, \eqref{eq:etak} is equivalent to
	\begin{eqnarray}
		\eta_{c(\mathbf{x})}(\mathbf{x})\geq \eta^*(\mathbf{x})-2\Delta_\epsilon(\mathbf{x}),
	\end{eqnarray}
	in which $\Delta_\epsilon$ has been defined in \eqref{eq:deltaeps}. Note that for all $j\neq c^*(\mathbf{x})$, the value of $\eta_j(\mathbf{x})$ is at most $\eta_s(\mathbf{x})$ (recall the definition \eqref{eq:etas}). Therefore, a suboptimal prediction $c(\mathbf{x})\neq c^*(\mathbf{x})$ can happen only if 
	\begin{eqnarray}
		\eta^*(\mathbf{x})-\eta_s(\mathbf{x})\leq 2\Delta_\epsilon(\mathbf{x}).
		\label{eq:gap}
	\end{eqnarray}	
	\eqref{eq:gap} suggests that as long as the gap between the largest and the second largest conditional class probability is large enough with respect to $\epsilon$ and the function approximation error $\Delta(\mathbf{x})$, the classifier will make an optimal prediction, i.e. $\hat{Y}=c^*(\mathbf{x})$. Hence
	\begin{eqnarray}
		\eta^*(\mathbf{x})-\eta_{c(\mathbf{x})}(\mathbf{x})\leq \left\{
		\begin{array}{ccc}
			2\Delta_\epsilon(\mathbf{x}) &\text{if} & 	\eta^*(\mathbf{x})-\eta_s(\mathbf{x})\leq 2\Delta_\epsilon(\mathbf{x})\\
			0 &\text{if} &\eta^*(\mathbf{x})-\eta_s(\mathbf{x})> 2\Delta_\epsilon(\mathbf{x}).
		\end{array}
		\right.
	\end{eqnarray}
	Recall Proposition \ref{prop:excess2},
	\begin{eqnarray}
		R-R^*&=&\mathbb{E}\left[\left(\eta^*(\mathbf{x})-\mathbb{E}[\eta_{c(\mathbf{x})}(\mathbf{x})]\right) f(\mathbf{x})d\mathbf{x}\right]\nonumber\\
		&\leq & \mathbb{E}\left[\int \mathbf{1}\left(\eta^*(\mathbf{x})-\eta_s(\mathbf{x})\leq 2\Delta_\epsilon(\mathbf{x})\right)\cdot 2\Delta_\epsilon(\mathbf{x}) f(\mathbf{x})d\mathbf{x} \right].
	\end{eqnarray}
	The proof of \eqref{eq:excessbound} is complete.
\end{proof}

Here we interpret Theorem \ref{thm:main}. As a union bound on the estimation error of all classes, $\Delta(\mathbf{x})$ usually only grows slightly with $K$. For example, in Appendix \ref{sec:approx}, we show that for $k$ nearest neighbor classifiers, $\Delta(\mathbf{x})$ grows with $\sqrt{\ln K}$. Deep learning models are much harder to analyze rigorously \cite{guo2017calibration}, thus we only provide an intuitive argument in Appendix \ref{sec:approx} to show that with increasing $K$, $\Delta$ remains nearly stable. Therefore, from \eqref{eq:excessbound}, if the shape of $\eta^*(\mathbf{x})-\eta_s(\mathbf{x})$ is nearly fixed, then the excess error will not grow significantly with $K$. This happens if $K$ is large but the conditional class probabilities given $\mathbf{x}$ for most classes are very small.

\subsection{Practical Implementation}\label{sec:implement}
Finally, we discuss the practical implementation with deep learning. \eqref{eq:g} contains $K$ models $g_1,\ldots, g_K$. Practically, it is not necessary to construct $K$ neural networks. For the convenience of implementation, we can let $\mathbf{w}_1,\ldots, \mathbf{w}_K$ share most parameters, except the last layer. Therefore, we only need to construct one neural network. The last layer can use sigmoid activation to match the privatized vector $\mathbf{Z}$ using binary cross entropy loss. This will simplify the implementation and avoid extra computation costs. Compared with non-private learning, our method only introduces $O(NK)$ additional time complexity for label transformation.

\begin{figure}[h!]
	\begin{subfigure}{0.48\linewidth}
		\includegraphics[width=\linewidth]{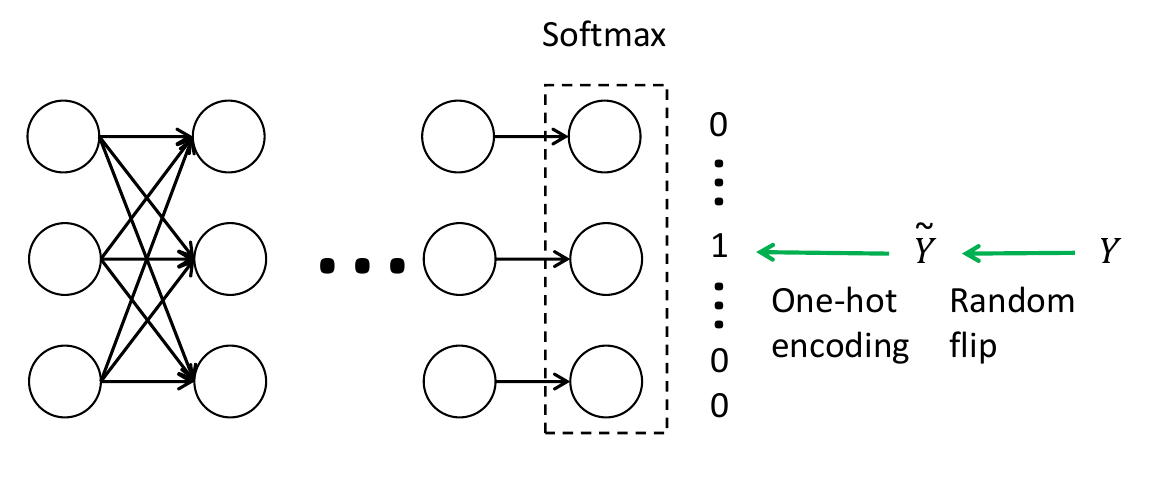}\caption{Scalar approximation.}
	\end{subfigure}
	\begin{subfigure}{0.48\linewidth}
		\includegraphics[width=\linewidth]{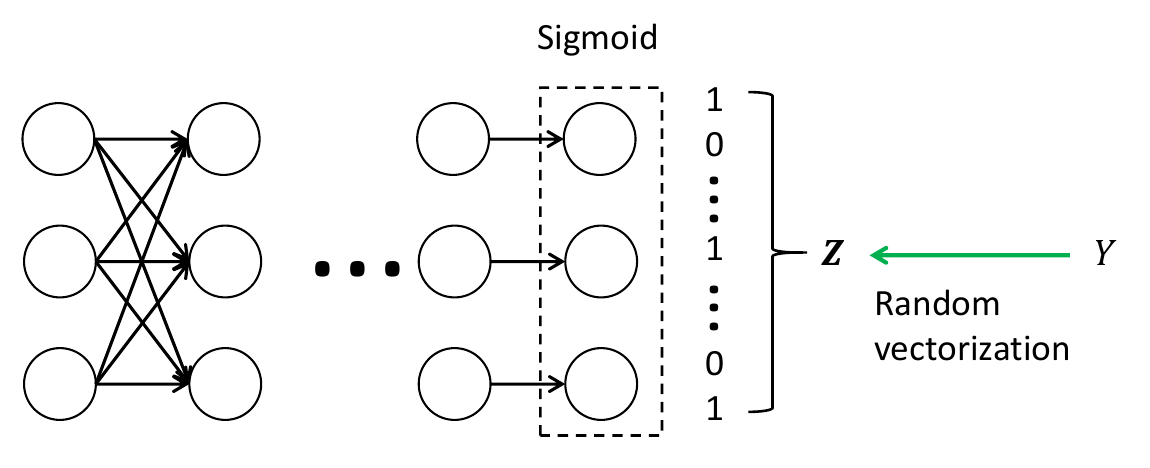}\caption{Vector approximation.}
	\end{subfigure}
	\caption{An illustrative figure to compare randomized response and our new method.}\label{fig:compare}
\end{figure}

Here a sigmoid activation at the last layer can not be replaced by softmax, since the randomized vector $\mathbf{Z}$ does not sum to $1$. Instead, $\mathbf{Z}$ may contain multiple elements with value $1$, thus we need to use sigmoid activation so that the output $(g_1(\mathbf{x}), \ldots, g_K(\mathbf{x}))$ can approximate $(\tilde{\eta}_1(\mathbf{x}), \ldots, \tilde{\eta}_K(\mathbf{x}))$.

Now we summarize the differences between the randomized response method (including its modifications \cite{ghazi2021deep}) and our new approach. The differences are illustrated in Figure \ref{fig:compare}. Instead of randomly flipping the label and then conducting one-hot encoding, we use a random vectorization according to \eqref{eq:mechanism1} and \eqref{eq:mechanism2}. The resulting vector may contain multiple elements whose values are $1$. The activation of the last layer is also changed to sigmoid. Correspondingly, the loss function for model training is the binary cross entropy loss, instead of the multi-class version.

\section{Numerical Experiments}\label{sec:numerical}
\subsection{Synthesized Data}
To begin with, we show some results using randomly generated training samples to show how is the accuracy affected by increasing the number of classes $K$. Samples in all $K$ classes are normally distributed with mean $\mu_i$, $i=1,\ldots,K$ and standard deviation $\sigma$. $\mu_i$ are set to be equally spaced on a unit circle. For all experiments, we generate $N=10000$ training samples. We use the $k$ nearest neighbor method with $k=200$ as the base learner of all models. We compare our method with the randomized response, LP-2ST (which uses RRWithPrior randomization) \cite{ghazi2021deep} and ALIBI \cite{malek2021antipodes}. All methods satisfy the local label DP requirement with $\epsilon=1$. The results are shown in Figure \ref{fig:syn}, in which each dot represents the average accuracy score over $1,000$ random trials.
\begin{figure}[h!]
	\begin{center}
		\begin{subfigure}{0.48\linewidth}
			\includegraphics[width=0.8\linewidth,height=0.6\linewidth]{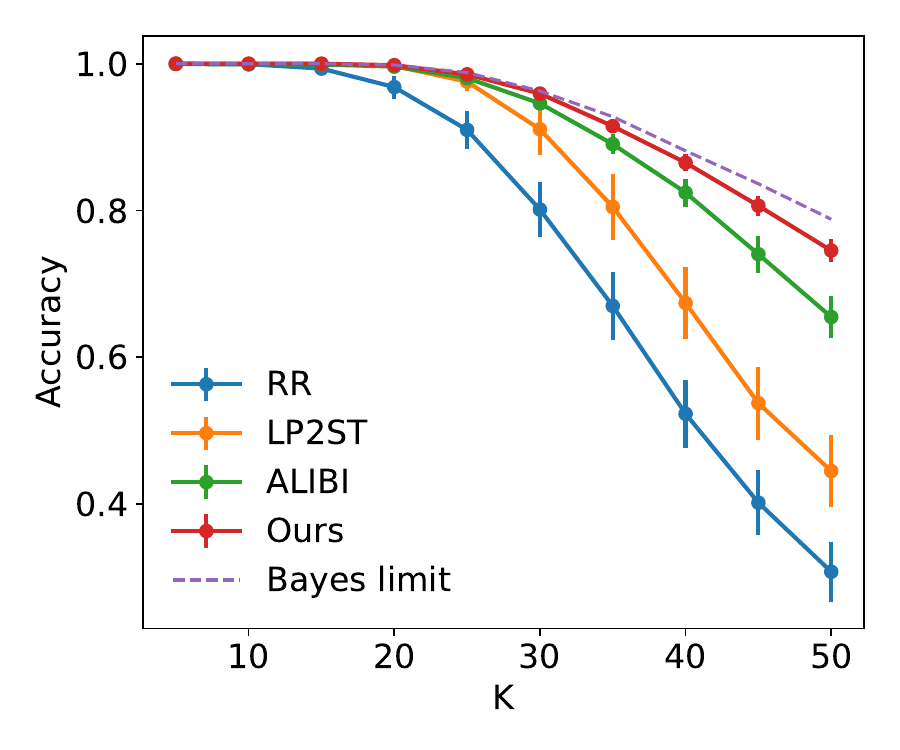}\caption{$\sigma=0.05$.}
		\end{subfigure}
		\begin{subfigure}{0.48\linewidth}
			\includegraphics[width=0.8\linewidth,height=0.6\linewidth]{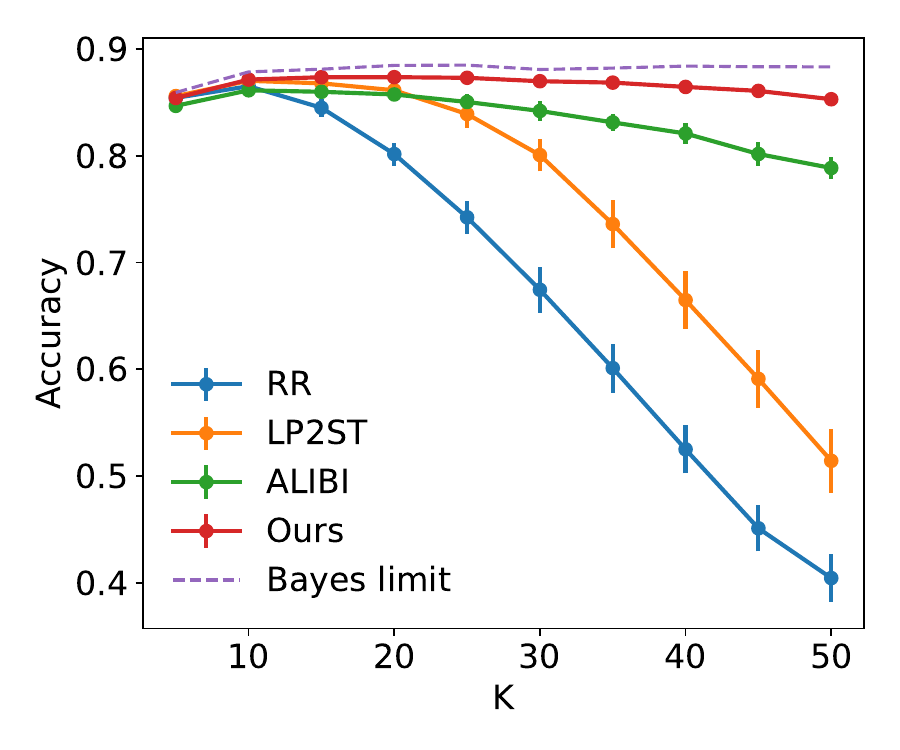}\caption{$\sigma = 2/K$.}
		\end{subfigure}		
	\end{center}
	\caption{Comparison of the performances of methods of learning with label DP with varying number of classes $K$. The purple dashed line denotes the accuracy of Bayes optimal classifier \eqref{eq:cstar}.}\label{fig:syn}
\end{figure}

Figure \ref{fig:syn}(a) shows the result using $\sigma = 0.05$. The result shows that with small $K$, our method achieves nearly the same accuracy as existing methods. However, with large $K$, the vector approximation method (the red line) shows clear advantage over other methods. Recall that our theoretical analysis shows that if $\eta^*(\mathbf{x})-\eta_s(\mathbf{x})$ is fixed, the excess risk will not grow significantly with the number of classes $K$. However, from Figure \ref{fig:syn}(a), the accuracy of vector approximation still decreases with $K$. Our explanation is that with large $K$, there is a stronger overlap between distributions for neighboring classes, thus $\eta^*(\mathbf{x})-\eta_s(\mathbf{x})$ becomes lower on average. According to Theorem \ref{thm:main}, the excess risk is also larger correspondingly. In Figure \ref{fig:syn}(b), we show the result with $\sigma = 2/K$, so that $\eta^*(\mathbf{x})-\eta_s(\mathbf{x})$ remains nearly the same with increasing $K$. In this case, with increasing $K$, the accuracy of our vector approximation approach remains stable.

In general, the experiment results with synthesized data validate our theoretical analysis and show that our method improves existing methods for cases with large number of classes $K$. Moreover, we show some additional experiments in Appendix \ref{sec:numadd} with other parameters, which validates that our method consistently outperforms existing approaches with large $K$.

\subsection{Real Data}
Now we evaluate our new method on standard benchmark datasets that have been widely used in previous works on differentially private machine learning, including MNIST \cite{lecun1998mnist}, Fashion MNIST \cite{xiao2017fashion} CIFAR-10 and CIFAR-100\cite{cifar}. For the MNIST and Fashion MNIST datasets, we use a simple convolutional neural network composed of two convolution and pooling layers with dropout rate $0.5$, which achieves $99\%$ accuracy for non-private training samples. For CIFAR-10 and CIFAR-100, we use ResNet-18, which achieves $95\%$ and $76\%$ accuracy for non-private data, respectively. In recent years, many new models have emerged, such that the state-of-the-art accuracies on CIFAR-10 and CIFAR-100 have been greatly improved, such as vision transformer \cite{dosovitskiy2020image}. However, we still use ResNet-18 for a fair comparison with existing works.

\begin{table}[h!]
	\caption{Experiments on MNIST and Fashion MNIST dataset under different privacy levels using a simple CNN. The accuracy for non-private data is $99\%$.}
	\label{tab:mnist}
	\vskip 0.15in
	\begin{subtable}{0.55\linewidth}	\begin{tabular}{lcccccc}
			\toprule
			$\epsilon$ & 0.2 & 0.3 & 0.5 & 0.7&1.0&2.0 \\
			\midrule
			RR&58.0&64.1&74.9&89.5&95.3&97.7\\
			LP-2ST & 47.5&64.3&81.4&87.9&91.8&97.7\\
			ALIBI & 50.9&55.1&84.3&89.4&94.5&97.6\\
			\textbf{Ours} & \textbf{63.9}&\textbf{78.4}&\textbf{87.9}&\textbf{93.4}&\textbf{96.2}&\textbf{97.4}\\
			
			\bottomrule
		\end{tabular}
		\caption{MNIST}
	\end{subtable}
	\hspace{\fill}
	\begin{subtable}{0.4\linewidth}
		\begin{tabular}{lcccc}
			\toprule
			$\epsilon$  & 0.5 & 1.0 &1.5&2.0 \\
			\midrule
			RR&59.6&74.6&79.7&84.7\\
			LP-2ST & 60.1&75.6&82.4&85.0\\
			ALIBI & 54.3&79.8&80.1&84.9\\
			\textbf{Ours} & \textbf{75.7}&\textbf{83.4}&\textbf{84.7}& \textbf{85.9}\\			
			\bottomrule
		\end{tabular}
		\caption{Fashion MNIST}
	\end{subtable}
	
	\vskip -0.1in
\end{table}

Table \ref{tab:mnist} shows the results on MNIST and Fashion MNIST datasets. The number in the table refers to the accuracy, i.e. percentage of correct predictions. LP-2ST is the method in \cite{ghazi2021deep} using two stages. ALIBI comes from \cite{malek2021antipodes}. The results show that our new method outperforms existing methods in general. With large $\epsilon$, such that the accuracy is already close to the non-private case, the accuracy may be slightly lower than existing methods due to some implementation details and some random factors.
\begin{table}[h!]
	\caption{Experiments on CIFAR-10 dataset using ResNet-18. The accuracy for non-private data is $95\%$.}\label{tab:cifar10}
	\vskip 0.15in
	\begin{center}
		\begin{tabular}{lcccc}	
			\toprule
			$\epsilon$ & 1 & 2 &3 & 4 \\
			\midrule
			LP-1ST &60.0 & 82.4 & 89.9 & 92.6 \\
			LP-2ST &63.7&86.1&92.2&93.4\\
			LP-1ST$\star$ & 67.6&84.0&90.2&92.8\\
			LP-2ST$\star$ &70.2&87.2&92.1&93.5\\
			ALIBI &71.0 & 84.2\footnote{Here $\epsilon=2.1$, slightly higher than $2$. This result comes from \cite{malek2021antipodes}, Table 1.}& -& -\\
			\textbf{Ours} & \textbf{75.1}&\textbf{91.6} & \textbf{92.7} &\textbf{93.2}\\
			\bottomrule
		\end{tabular}
	\end{center}
\end{table}

Table \ref{tab:cifar10} shows the result with the CIFAR-10 dataset. In the table, LP-1ST$\star$ and LP-2ST$\star$ refer to the LP-1ST and LP-2ST methods, respectively, in which models have been pretrained with the CIFAR-100 dataset, which is regarded as public and no privacy protection is needed. Such setting follows \cite{abadi2016deep}. The results of the first four rows of Table \ref{tab:cifar10} come from \cite{ghazi2021deep}, Table 1. The fifth row is the result of ALIBI, which comes from \cite{malek2021antipodes}, Table 1. For the CIFAR-10 dataset, our method outperforms existing approaches with small $\epsilon$. Note that our method does not use pre-training. However, the results are still better than existing methods even with pre-training. Similar to the experiment of the MNIST dataset, the advantage of our method becomes less obvious when $\epsilon$ is large, and the accuracies of existing methods are already close to the non-private baseline.

Table \ref{tab:cifar100} shows the result with the CIFAR-100 dataset. The results of LP-2ST and ALIBI come from Table 2 in \cite{ghazi2021deep} and \cite{malek2021antipodes}, respectively. The results show that our method significantly outperforms LP-2ST and ALIBI. The advantage of our method is especially obvious with small $\epsilon$. It is worth mentioning that ALIBI calculates the soft label instead of conducting just a label flipping. The soft label can be expressed by a vector, thus the idea of ALIBI shares some similarity with our vector approximation method. As a result, ALIBI performs significantly better than LP-2ST. However, from Table \ref{tab:cifar100}, our method also outperforms ALIBI.
\begin{table}[h!]
	\caption{Experiments on CIFAR-100 dataset using ResNet-18. The accuracy for non-private data is $76\%$.}\label{tab:cifar100}
	\vskip 0.15in
	\begin{center}
		\begin{tabular}{lcccccc}	
			\toprule
			$\epsilon$ & 2& 3 & 4&5&6 & 8\\
			\midrule
			LP-2ST &-& 28.7 & 50.2 & 63.5 & 70.6 &74.1\\
			ALIBI &-& 55.0 & 65.0 & 69.0& 71.5 & 74.4 \\
			\textbf{Ours} & \textbf{64.4} & \textbf{67.7}&\textbf{71.8}& \textbf{73.8}& \textbf{74.1} &\textbf{74.4}\\
			\bottomrule
		\end{tabular}
	\end{center}
\end{table}

Here we summarize the experimental results. The synthesized data shows that our vector approximation method has relatively stable performance with the increase of the number of classes $K$. As a result, with large $K$, our method has a significantly better performance compared with existing methods. Such observation agrees well with our theoretical analysis in Section \ref{sec:analysis}. We have also tested the results using benchmark datasets, which validate our method in real scenarios. It worths mentioning that several works on central label DP achieve better accuracy than those reported here, such as PATE \cite{malek2021antipodes} and its improvement in \cite{tang2022machine}. As a weaker privacy requirement compared with local label DP, these results under central label DP are not comparable to our results. 

\section{Conclusion}\label{sec:conc}

In this paper, we have proposed a new method for classification with label differential privacy based on vector approximation. The general idea is to generate a noisy random vector first to protect the privacy of labels and train the model to match the privatized vector. Compared with existing scalar approximation methods, which conduct label flipping and let the flipped label match the model outputs, our method has better theoretical convergence properties and empirical performances. In particular, our method solves the problem of the degradation of utility with large number of classes $K$. Our analysis suggests that the prediction is optimal as long as the gap between the largest and second largest conditional class probability is enough, thus the risk only grows slightly with $K$. Moreover, our method is easier to implement and only requires $O(NK)$ time for label privatization. Numerical experiments with both synthesized data and standard benchmark datasets validate the effectiveness of our proposed method.

\textbf{Limitations:} The limitations of our work include the following aspects. Firstly, the privacy mechanism is not designed optimally, and it is possible to improve it further. For example, one may consider using asymmetric flipping probabilities, instead of \eqref{eq:mechanism1} and \eqref{eq:mechanism2}. Secondly, our work is currently restricted to classification cases. It is possible to extend to regression problems in the future. Finally, the performance at low privacy regime (i.e. large $\epsilon$) may be further improved.

\bibliographystyle{nips}
\bibliography{labeldp}

\newpage
\appendix

\section{Additional Numerical Experiments}\label{sec:numadd}
Here we present some additional experiments.

\textbf{Higher $\epsilon$.} Figure \ref{fig:syn} shows the result with $\epsilon=1$. Now we set $\epsilon=2$, and the results are shown in Figure \ref{fig:syn2}.

\begin{figure}[h!]
	\begin{center}
		\begin{subfigure}{0.48\linewidth}
			\includegraphics[width=0.8\linewidth,height=0.6\linewidth]{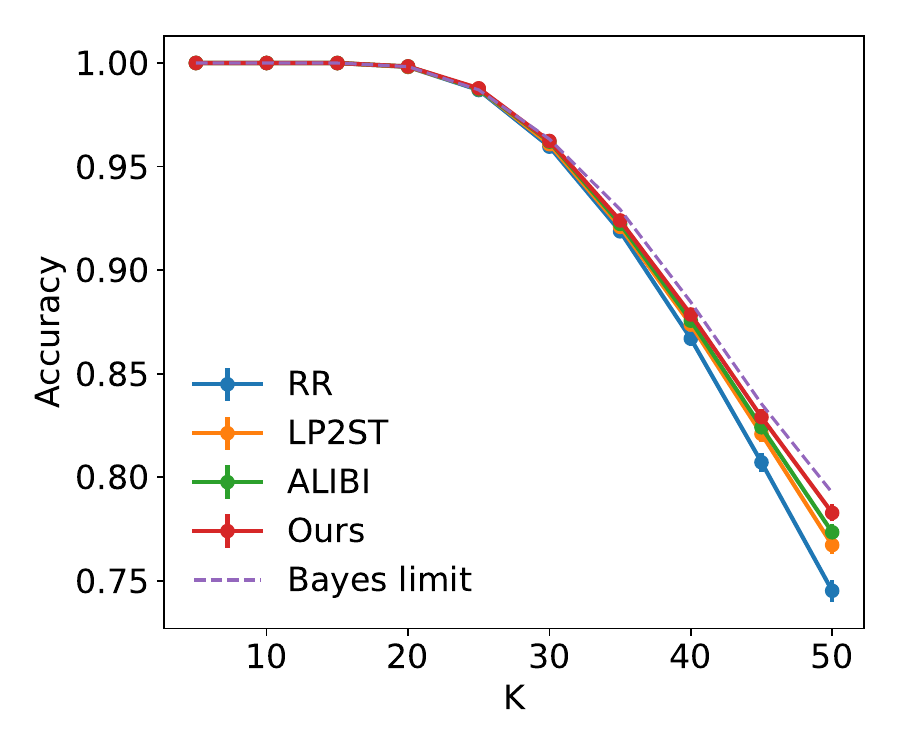}\caption{$\sigma=0.05$.}
		\end{subfigure}
		\begin{subfigure}{0.48\linewidth}
			\includegraphics[width=0.8\linewidth,height=0.6\linewidth]{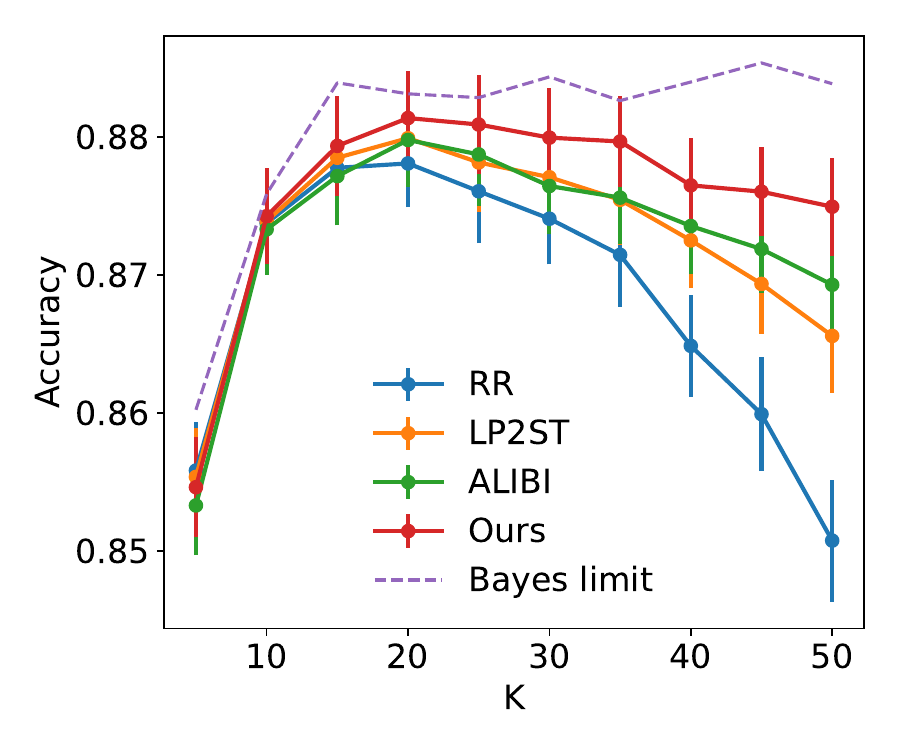}\caption{$\sigma = 2/K$.}
		\end{subfigure}		
	\end{center}
	\caption{Experiments with simulated data with $\epsilon=2$, $k=100$.}\label{fig:syn2}
\end{figure}

With a weaker privacy requirement $\epsilon = 2$, the performances of all methods become better. The vector approximation method still outperforms existing methods.

\textbf{Other $k$.} Now we still use $\epsilon=1$. However, recall that previous experiments use $k=200$. Now we use $k$ nearest neighbor method with $k=100$ as the base learner. 

\begin{figure}[h!]
	\begin{center}
		\begin{subfigure}{0.48\linewidth}
			\includegraphics[width=0.8\linewidth,height=0.6\linewidth]{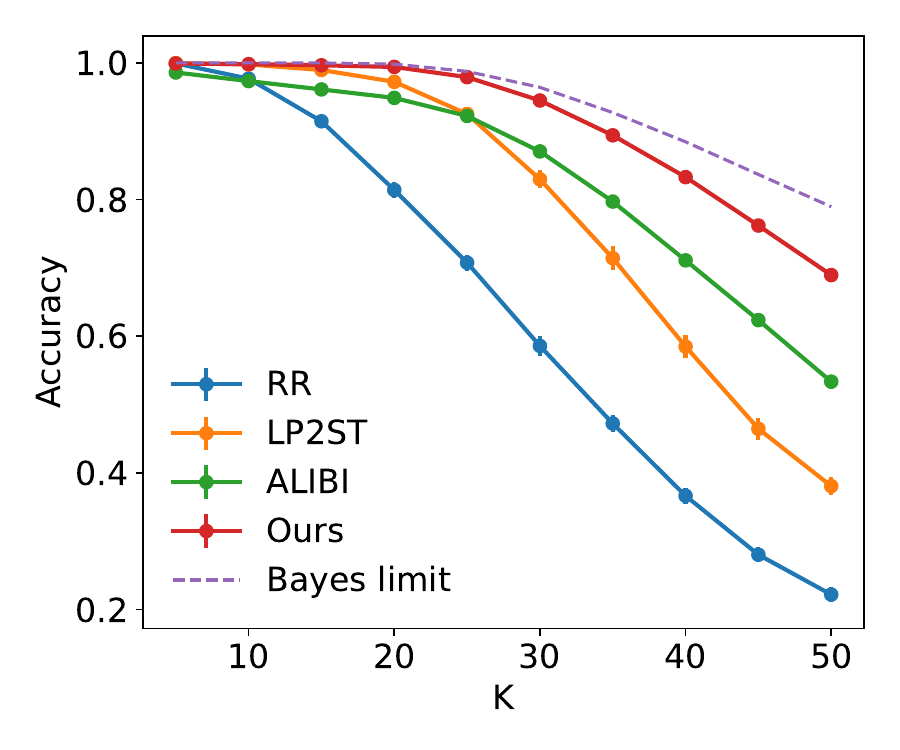}\caption{$\sigma=0.05$.}
		\end{subfigure}
		\begin{subfigure}{0.48\linewidth}
			\includegraphics[width=0.8\linewidth,height=0.6\linewidth]{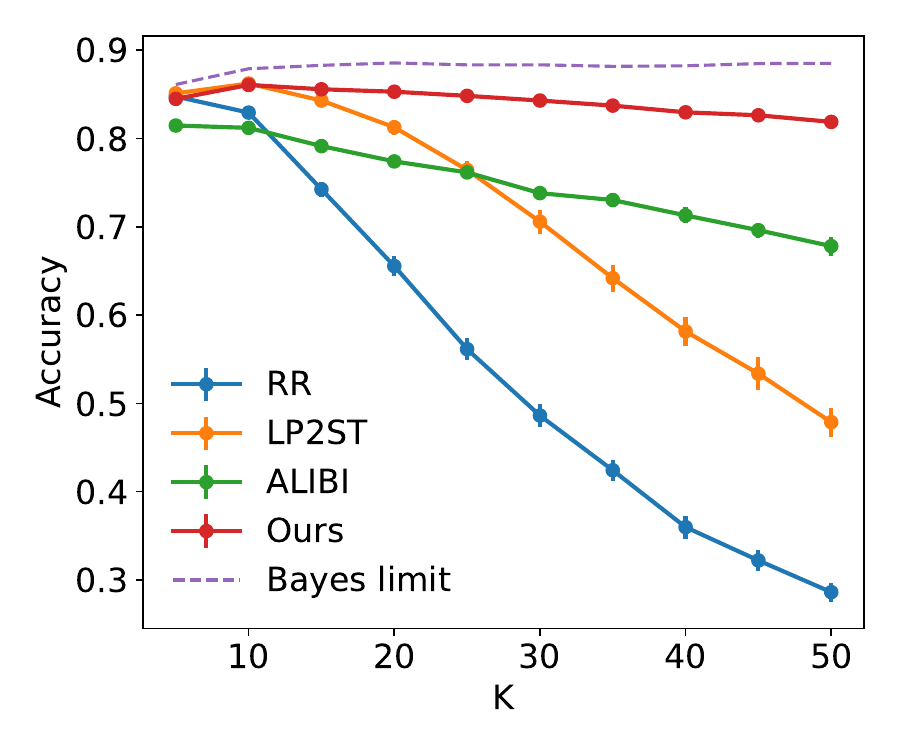}\caption{$\sigma = 2/K$.}
		\end{subfigure}		
	\end{center}
	\caption{Experiments with simulated data with $\epsilon=1$, $k=100$.}\label{fig:syn3}
\end{figure}

\section{Further Arguments about Approximation Error}\label{sec:approx}
\subsection{$k$ Nearest Neighbor Method}
Now we bound $\Delta(\mathbf{x})$ for $k$ nearest neighbor approach, which has been extensively analyzed in literatures \cite{chaudhuri2014rates,doring2018rate,gadat2016classification,zhao2021minimax,cannings2020local,zhao2021efficient}\footnote{To avoid confusion, we emphasize that lowercase $k$ is the number of nearest neighbor used in the kNN method, while $K$ is the number of classes.}. Recall that the output structure expressed in \eqref{eq:g}. For the kNN classifier, let
\begin{eqnarray}
	g_j(\mathbf{x})=\frac{1}{k}\sum_{i\in \mathcal{N}_k(\mathbf{x})}Z_i(j),
\end{eqnarray}
in which $\mathcal{N}_k(\mathbf{x})$ is the set of $k$ nearest neighbors of $\mathbf{x}$ among $\{\mathbf{X}_1,\ldots, \mathbf{X}_N \}$.

Define
	$p_r(\mathbf{x})=\int_{B(\mathbf{x}, r)}f(\mathbf{u})d\mathbf{u}$
as the probability mass of $B(\mathbf{x}, r)$, in which $f$ is the pdf of feature vector $\mathbf{X}$. Define
\begin{eqnarray}
	r_0(\mathbf{x})=\inf\left\{r|p_r(\mathbf{x})\geq \frac{2k}{N} \right\}.
\end{eqnarray}
Define
	$\rho(\mathbf{x})=\underset{i\in \mathcal{N}_k(\mathbf{x})}{\max}\norm{\mathbf{X}_i-\mathbf{x}}$
as the kNN distance. Assume that $\eta_j(\mathbf{x})$ is $L$-Lipschitz. Now we bound the approximation error as follows.
\begin{eqnarray}
	|g_j(\mathbf{x})-\eta_j(\mathbf{x})|&=&\left|\frac{1}{k}\sum_{i\in \mathcal{N}_k(\mathbf{x})} Z_i(j)-\tilde{\eta}_j(\mathbf{x})\right|\nonumber\\
	&\leq & \left|\frac{1}{k}\sum_{i\in \mathcal{N}_k(\mathbf{x})} (Z_i(j)-\tilde{\eta}_j(\mathbf{X}_i))\right|+\left|\frac{1}{k}\sum_{i\in \mathcal{N}_k(\mathbf{x})}(\tilde{\eta}_j(\mathbf{X}_i) - \tilde{\eta}_j(\mathbf{x}))\right|\nonumber\\
	&\leq &\left|\frac{1}{k}\sum_{i\in \mathcal{N}_k(\mathbf{x})} (Z_i(j)-\tilde{\eta}_j(\mathbf{X}_i))\right|+\frac{e^{\epsilon/2}-1}{e^{\epsilon/2}+1}L\rho(\mathbf{x}),
	\label{eq:err}
\end{eqnarray}
in which the last step uses \eqref{eq:convert}. From Hoeffding's inequality,
\begin{eqnarray}
	\text{P}\left(\left|\frac{1}{k}\sum_{i\in \mathcal{N}_k(\mathbf{x})} (Z_i(j)-\tilde{\eta}_j(\mathbf{X}_i))\right|>\sqrt{\frac{1}{2k}\ln \frac{2K}{\delta}}\right)\leq \frac{\delta}{K}.
	\label{eq:err1}
\end{eqnarray}
Moreover, define $n(\mathbf{x}, r)$ as the number of samples in $B(\mathbf{x}, r)$, then
\begin{eqnarray}
	\text{P}(\rho(\mathbf{x})>r_0(\mathbf{x}))&\leq & \text{P}(n(\mathbf{x}, r_0(\mathbf{x}))<k)\nonumber\\
	&\leq & e^{-Np_{r_0(\mathbf{x})}(\mathbf{x})}\left(\frac{eNp_{r_0(\mathbf{x})}(\mathbf{x})}{k}\right)^k\nonumber\\
	&=& e^{-2k}(2e)^k=e^{-(1-\ln 2)k}.
	\label{eq:err2}
\end{eqnarray}
From \eqref{eq:err}, \eqref{eq:err1} and \eqref{eq:err2}, consider the definition of $\Delta$ in \eqref{eq:delta}, with probability at least $1-\delta-e^{-(1-\ln 2)k}$,
\begin{eqnarray}
	\Delta(\mathbf{x})\leq \sqrt{\frac{1}{2k}\ln \frac{2K}{\delta}}+\frac{e^{\epsilon/2}-1}{e^{\epsilon/2}+1}Lr_0,
\end{eqnarray}
which grows with $\sqrt{\ln K}$.
\subsection{Neural Network}
Neural networks are much harder to analyze compared with traditional methods like kNN. Here we just provide an argument to intuitively show that the approximation $g_j(\mathbf{x})\approx \tilde{\eta}_j(\mathbf{x})$ holds.

In deep neural network models, the parameters can be expressed as $\mathbf{w}=(\mathbf{w}_1,\ldots, \mathbf{w}_K)\in \mathcal{W}_1\times\ldots\times \mathcal{W}_K$, and the output \eqref{eq:g} is now $g_j(\mathbf{x})=h_j(\mathbf{w}_j^*, \mathbf{x})$, in which $h_j$ is the output of $j$-th neural network model. $\mathbf{w}_j^*$ is trained using $(\mathbf{X}_i, \mathbf{Z}_i)$, $i=1,\ldots, N$ to minimize the empirical loss:
\begin{eqnarray}
	\mathbf{w}_j^* = \underset{\mathbf{w}_j\in \mathcal{W}_j}{\arg\min}\; \frac{1}{N}\sum_{i=1}^N l(h_j(\mathbf{w}_j, \mathbf{X}_i), Z_i(j)), j\in [K],
	\label{eq:wk}
\end{eqnarray} 
in which $l$ is the surrogate loss function used for training. A typical choice of $l$ is the binary cross entropy loss.

Let
$\mathcal{G}_j=\{h_j(\mathbf{w}_j, \cdot)|\mathbf{w}_j\in \mathcal{W}_j\}$
be the set of all possible functions $g_j$ parameterized by $\mathbf{w}_j$. Then from \eqref{eq:wk},
\begin{eqnarray}
	\hspace{-1cm}\frac{1}{N}\sum_{i=1}^N l(g_j(\mathbf{w}_j^*, \mathbf{X}_i), Z_i(j))=\underset{u\in \mathcal{G}_j}{\min}\frac{1}{N}\sum_{i=1}^N l(u(\mathbf{X}_i), Z_i(j)).
	\label{eq:emp}
\end{eqnarray}
Denote $\mathcal{G}=\{g:\mathcal{X}\rightarrow [0,1]\}$ as the set of all functions mapping from $\mathcal{X}$ to $[0,1]$. Then
\begin{eqnarray}
\mathbb{E}[l(g_j(\mathbf{w}_j^*, \mathbf{X}), Z(j))]
	&\overset{(a)}{\approx}& \underset{u\in \mathcal{G}_j}{\min}\; \mathbb{E}[l(u(\mathbf{X}),
	Z(j))]\nonumber\\
	&\overset{(b)}{\approx}& \underset{u\in \mathcal{G}}{\min}\; \mathbb{E}[l(u(\mathbf{X}), Z(j))]\nonumber\\
	&=&\mathbb{E}\left[\underset{u\in \mathcal{G}}{\min}\; \mathbb{E}[l(u(\mathbf{x}), Z(j))|\mathbf{X}=\mathbf{x}]\right].
	\label{eq:expect}
\end{eqnarray}

In \eqref{eq:expect}, (a) holds because from weak law of large number, the right hand side of \eqref{eq:emp} converges to its expectation. For the left side of \eqref{eq:emp}, theories about generalization has been widely established \cite{mohri2018foundations}. The generalization gap converges to zero with the increase of sample size $N$. (b) uses the universal approximation theorem \cite{barron1993universal,liang2016deep,yarotsky2018optimal}. For a neural network that is sufficiently wide and deep, each function in $\mathcal{G}$ can be closely approximated by a function in $\mathcal{G}_j$.

From now on, suppose that $l$ is the binary cross entropy loss, i.e. $l(q,p)=-p\ln q-(1-p)\ln (1-q)$. It can be shown that $\tilde{\eta}_j(\mathbf{x})$ is the minimizer of $\mathbb{E}[l(u(\mathbf{x}), Z(j))|\mathbf{X}=\mathbf{x}]$. Recall that $\mathcal{G}$ is defined as the set of all functions mapping from $\mathcal{X}$ to $[0,1]$. Hence, $g_j(\mathbf{w}_j^*)\in \mathcal{G}$. This implies
\begin{eqnarray}
	\mathbb{E}\left[ l(g_j(\mathbf{w}_j^*, \mathbf{X}), Z(j))\right]\geq \mathbb{E}\left[\underset{u\in \mathcal{G}}{\min}\mathbb{E}[l(u(\mathbf{x}),Z(j))|\mathbf{X}=\mathbf{x}]\right].
	\label{eq:lb}
\end{eqnarray}
From \eqref{eq:expect}, the left hand side of \eqref{eq:lb} is not larger than the right hand side too much. Therefore, there exists a $\delta>0$ such that
\begin{eqnarray}
	\mathbb{E}\left[\underset{u\in \mathcal{G}}{\min}\mathbb{E}[l(u(\mathbf{x}),Z(j))|\mathbf{X}=\mathbf{x}]\right]\leq	\mathbb{E}\left[ l(g_j(\mathbf{w}_j^*, \mathbf{X}), Z(j))\right]\leq \mathbb{E}\left[\underset{u\in \mathcal{G}}{\min}\mathbb{E}[l(u(\mathbf{x}),Z(j))|\mathbf{X}=\mathbf{x}]\right]+\delta.
	\label{eq:twoside}
\end{eqnarray}
Recall the steps in \eqref{eq:expect}. $\delta$ goes to zero with the increase of model complexity and sample size $N$. 

Assume that $g_j(\mathbf{w}_j^*, \mathbf{x})$ is continuous in $\mathbf{x}$. Then $l(g_j(\mathbf{w}_j^*, \mathbf{x}), Z(j))$ is also continuous in $\mathbf{x}$. As a result, $l(g_j(\mathbf{w}_j^*, \mathbf{x}), Z(j))$ can not be much larger than $\min_{u\in \mathcal{G}}\mathbb{E}[l(u(\mathbf{x}),Z(j))|\mathbf{X}=\mathbf{x}]$, otherwise there will be a neighborhood of $\mathbf{x}$ called $\mathcal{N}(\mathbf{x})$, such that $l(g_j(\mathbf{w}_j^*, \mathbf{x}'), Z(j))$ is much larger than $\min_{u\in \mathcal{G}}\mathbb{E}[l(u(\mathbf{x}'),Z(j))|\mathbf{X}=\mathbf{x}']$ for all $\mathbf{x}'\in \mathcal{N}(\mathbf{x})$, which violates the second inequality in \eqref{eq:twoside}. Therefore,
\begin{eqnarray}
	\mathbb{E}[l(g_j(\mathbf{w}_j^*, \mathbf{x}), Z(j))|\mathbf{X}=\mathbf{x}]\approx \underset{u\in\mathcal{G}}{\min} \mathbb{E}[l(u(\mathbf{x}),Z(j))|\mathbf{X}=\mathbf{x}].
	\label{eq:localapprox}
\end{eqnarray}
Recall that $l$ denotes the binary cross entropy loss, thus
\begin{eqnarray}
	l(u(\mathbf{x}), Z(j)) = -Z(j)\ln u(\mathbf{x}) - (1-Z(j))\ln (1-u(\mathbf{x})).
\end{eqnarray}
Recall the definition of $\tilde{\eta}_j$ in \eqref{eq:etazdf}, we have
\begin{eqnarray}
	\mathbb{E}[l(u(\mathbf{x}), Z(j))]=-\tilde{\eta}_j(\mathbf{x})\ln u(\mathbf{x}) - (1-\tilde{\eta}_j(\mathbf{x}))\ln (1-u(\mathbf{x})).
\end{eqnarray}
To minimize $\mathbb{E}[l(u(\mathbf{x}), Z(j))|\mathbf{X}=\mathbf{x}]$, we have $u(\mathbf{x})=\tilde{\eta}_j(\mathbf{x})$. Therefore, from \eqref{eq:localapprox},
\begin{eqnarray}
	g_j(\mathbf{x})=h_j(\mathbf{w}_j^*, \mathbf{x})\approx \tilde{\eta}_j(\mathbf{x}).
\end{eqnarray}
Such argument holds for all $j=1,\ldots, K$. Recall \eqref{eq:delta}, $\Delta(\mathbf{x})$ can be obtained by a union bound of $g_j(\mathbf{x})-\tilde{\eta}_j(\mathbf{x})$.


\newpage

\end{document}